\documentclass[letterpaper]{article} 
\usepackage{aaai2027}  
\usepackage[hyphens]{url}  
\usepackage{graphicx} 
\usepackage{natbib}  
\usepackage{caption} 
\usepackage{algorithm}
\usepackage{algorithmic}
\usepackage{amsmath}
\usepackage{amsthm}
\usepackage{amssymb}
\usepackage{bm}
\usepackage{booktabs}
\usepackage{array}
\usepackage{amsthm}

\newtheorem{corollary}{Corollary}
\newtheorem{lemma}{Lemma}
\newtheorem{theorem}{Theorem}
\newtheorem{assumption}{Assumption}

\usepackage{newfloat}
\usepackage{listings}
\DeclareCaptionStyle{ruled}{labelfont=normalfont,labelsep=colon,strut=off} 
\floatstyle{ruled}
\newfloat{listing}{tb}{lst}{}
\floatname{listing}{Listing}

\usepackage{booktabs}

\affiliations{
    \textsuperscript{\rm 1}Association for the Advancement of Artificial Intelligence\\

    1101 Pennsylvania Ave, NW Suite 300\\
    Washington, DC 20004 USA\\
    proceedings-questions@aaai.org
}

\title{DIB-OD: Preserving the Invariant Core for Robust Heterogeneous Graph Adaptation via Decoupled Information Bottleneck and Online Distillation}
\author {
    Yang Yan\textsuperscript{\rm 1},
    Yunxuan Li\textsuperscript{\rm 1},
    Qiuyan Wang\textsuperscript{\rm 2},
    Tianjin Huang\textsuperscript{\rm 3},
    Qiudong Yu\textsuperscript{\rm 1}\corresponding
}
\affiliations {
    \textsuperscript{\rm 1}School of Information Technology and Engineering, Tianjin University of Technology and Education, Tianjin, China \\
    \textsuperscript{\rm 2}School of Computer Science and Technology, Tiangong University, Tianjin, China \\
    \textsuperscript{\rm 3}Department of Computer Science at University of Exeter \\
    yanyangucas@126.com, qiudongyu2027@126.com, wangyan198801@163.com, cxy19980916@gmail.com,  t.huang2@exeter.ac.uk
}

\begin{document}

\maketitle

\begin{abstract}
Graph pre-training can facilitate knowledge transfer across graph
datasets, but severe structural and feature shifts may cause
negative transfer and adaptation-induced overwriting of reusable
knowledge.
We propose \textbf{DIB-OD}, a heterogeneous graph adaptation
framework that combines a \textbf{D}ecoupled
\textbf{I}nformation \textbf{B}ottleneck with
\textbf{O}nline \textbf{D}istillation.
A multi-view teacher first learns a compressed, task-relevant
representation, which is distilled into a transferable core branch
and a complementary residual branch.
Mutual-information objectives and HSIC-based dependence
regularization discourage information overlap between the branches,
while a confidence-aware semantic regularizer and a frozen teacher
anchor reliable pretrained information during target-domain
adaptation.
Experiments on seven graph-classification datasets spanning
chemical, biological, and social-network domains show consistent
improvements over representative baselines, particularly under
challenging cross-type transfer.
\end{abstract}


\section{Introduction}

Graph Neural Networks  have revolutionized representation learning across diverse fields, from molecular property prediction to social network analysis. 
Graph pre-training learns transferable representations from
unlabeled graphs before downstream adaptation
\cite{hu2020strategies,sun2020infograph,hou2022graphmae}.
 This pursuit of cross-domain generalization is also a fundamental challenge in many AI fields.
 Despite its success, a critical generalization wall persists: most existing GNN pre-training and domain adaptation frameworks are designed for intra-domain shifts—addressing distribution variations within the same category of data (e.g., regional shifts in citation networks or sensor noise in protein graphs).

Recently, an emerging trend has seen the development of universal graph models that aim to aggregate knowledge across disparate data sources~\cite{zhao2025fully,he2025unigraph,2025Homophily}. 
While these pioneer efforts have demonstrated the potential of large-scale graph learning, a critical representation bottleneck remains largely unaddressed across the field.
Existing pre-training strategies, including those emerging universal frameworks, often treat latent features as monolithic entities. 
This results in a fundamental information entanglement: domain-specific structural styles are deeply entangled with the task-relevant topological laws. 
In heterogeneous transfer scenarios, the noise's representation from a source domain  often acts as a spurious correlation that misleads the model when it encounters a target domain with a completely different connectivity logic, leading to negative transfer.

Furthermore, heterogeneous target-domain adaptation introduces a fundamental stability–plasticity tension for pretrained GNNs.
 Although the model should remain sufficiently plastic to capture target-specific patterns, unrestricted fine-tuning may corrupt the transferable structural principles acquired during pre-training. 
 Without an explicit mechanism to isolate and protect the invariant core, target-specific signals can overwrite generalizable knowledge, resulting in adaptation-induced invariant-core corruption.

\paragraph{Problem Setting.}
We study heterogeneous supervised graph transfer, in which a model
is first pretrained on labeled source-domain graphs and subsequently
adapted using labeled graphs from the target training split. The
held-out target test split is not used for training, model selection,
or construction of the semantic regularizer. Source and target
datasets may differ substantially in graph structure, feature
statistics, and application-specific prediction semantics.
Accordingly, DIB-OD transfers graph representations rather than
assuming direct alignment of source and target class semantics.
This setting differs from conventional unsupervised closed-set graph
domain adaptation, where target labels are unavailable and the task
semantics are typically shared across domains.

To address these limitations, we propose DIB-OD, a heterogeneous
graph transfer framework based on Decoupled Information Bottleneck
and Online Distillation. Our approach shifts the focus from
representation aggregation to knowledge purification. Built upon a
multi-view teacher--student architecture, DIB-OD allocates learned
information between two branches: (1) a task-relevant core branch
that is encouraged to retain transferable predictive information,
and (2) a complementary residual branch that preserves
instance-level information not allocated to the core. The information
bottleneck controls label-information allocation, while the
Hilbert--Schmidt Independence Criterion provides a tractable
dependence penalty that discourages the two branches from encoding
identical content. The terms ``core'' and ``residual'' describe their
intended optimization roles rather than claiming exact statistical
independence or identifiable recovery of latent causal factors.


To resolve this stability–plasticity tension, we introduce a self-adaptive semantic regularizer that selectively preserves reliable invariant semantics while allowing uncertain and domain-specific components to adapt.
Unlike fixed constraints, this regularizer dynamically modulates the influence of target labels based on predictive confidence, thereby reducing the risk that reliable core semantics are distorted from being overwritten by noisy target signals.
\begin{itemize}
\item  \textbf{A Purification Paradigm for Heterogeneous Transfer:} We identify the ``information entanglement" issue in universal graph learning and propose a novel decoupling framework that bridges fundamentally different graph categories.
\item  \textbf{Decoupled Information Bottleneck Distillation:} We introduce an optimization scheme that combines IB-guided information allocation, online contrastive distillation, and HSIC
dependence regularization to construct complementary core and
residual representations.
\item  \textbf{Self-adaptive Knowledge Preservation:} We design a confidence-aware semantic regularizer together with a frozen-teacher strategy to selectively anchor reliable pretrained semantics while allowing domain-specific components to adapt, thereby mitigating target-induced corruption of the invariant core.

\item \textbf{Extensive Benchmarking:} Extensive experiments on several cross-domain benchmarks demonstrate that DIB-OD significantly outperforms state-of-the-art methods, especially in challenging inter-type domain transfers  (e.g., Chem $\rightarrow$ Bio), showcasing superior generalization.
\end{itemize}

\section{Related Works}

\textbf{Graph transfer and OOD generalization.}
Graph domain adaptation transfers knowledge across graphs with
different structural and feature distributions.
Representative methods include DANE~\cite{zhang2019dane},
UDAGCN~\cite{wu2020unsupervised},
ASN~\cite{zhang2021adversarial},
GRADE~\cite{wu2023non},
JHGDA~\cite{shi2023improving}, and
Pair-Align~\cite{liu2024pairwise}.
Most of these methods focus on related source and target graph
collections
\cite{you2023graph,2025Homophily}.
Graph OOD and universal-model studies instead seek representations
that remain useful across environments or datasets
\cite{li2022learning,chen2023does,gui2022good,
mo2024graph,chen2025cross,zhao2025fully,he2025unigraph}.
DIB-OD focuses on internal knowledge decomposition during
heterogeneous transfer rather than treating the transferred
representation as a monolithic feature vector.

\textbf{Information-theoretic decomposition and distillation.}
Information bottleneck, contrastive learning, and adversarial
objectives have been used to separate shared and view-specific
factors
\cite{yin2024mvaibnet,fan2023generalizing,wu2020graph,
li2021disentangled,du2026gcib,wang2024disentangled}.
Knowledge distillation typically transfers teacher predictions or
relations to a student
\cite{park2019relational,chen2021cross,
joshi2022representation}.
In contrast, DIB-OD jointly uses label-information allocation,
branch-dependence regularization, and online contrastive distillation
to construct a transferable core branch and to preserve it during
target-domain adaptation.

\section{Methodology}

\subsection{Framework Overview}

As Figure~\ref{fig:SystemArchitecture} illustrated, our DIB-OD framework consists of three main components: (1) a multi-view encoder that processes various augmented graph views, (2) a teacher model that learns a fused representation via the Information Bottleneck (IB) principle, and (3) student models that disentangle the representation into invariant core and redundant subspaces through online contrastive distillation.


Our analysis follows the learning pipeline of DIB-OD.
Lemma~\ref{lm:view_equivalence}~and~\ref{lm:view_decomposition}~first justify the view-conditioned teacher objective
and its decomposition over individual augmented views.
Theorem~1 then connects student-side core--residual decoupling to
cross-domain generalization.
Finally, Theorem~2 provides a reliability interpretation for the
class-wise confidence threshold used by the semantic regularizer
during target-domain adaptation.

\begin{figure*}[!ht]
\centering
\includegraphics[width=\textwidth]{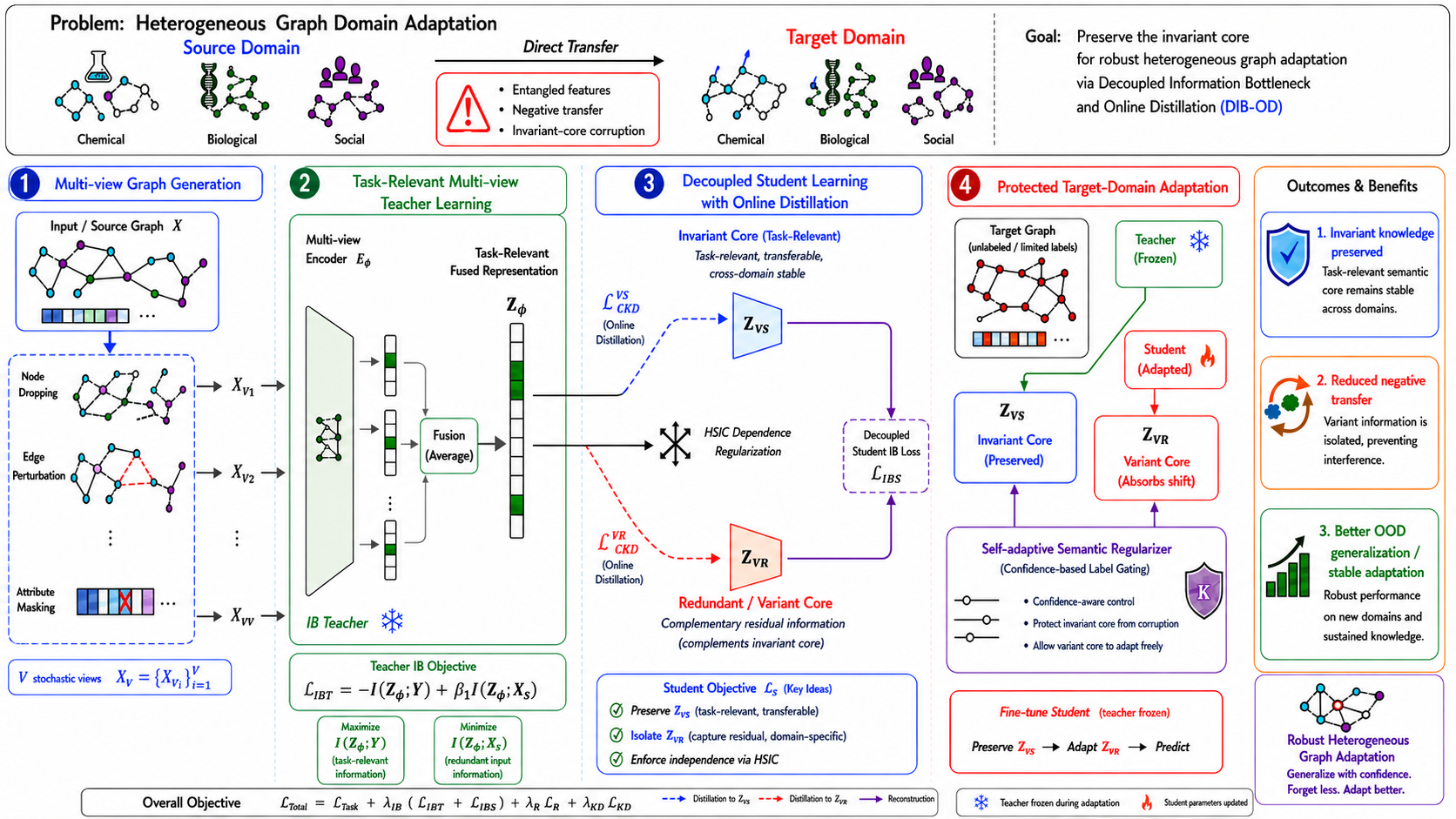}
\caption{Overview of DIB-OD}
\label{fig:SystemArchitecture}
\end{figure*}

\subsection{Multi-view Graph Disentangled Auto-Encoder}
Given an input graph, we construct $V$ stochastic views
$\mathbf{X}_{\Phi}=\{\mathbf{X}^{(1)},\ldots,\mathbf{X}^{(V)}\}$
using node dropping, edge perturbation, and attribute masking.
The corresponding view encoders produce a fused teacher
representation
\begin{align} \label{eq:fused_teacher}
\mathbf{Z}_{\Phi}
=
\sum_{i=1}^{V}\alpha_i E_i(\mathbf{X}^{(i)}),
\qquad
\alpha_i\geq0,\quad
\sum_{i=1}^{V}\alpha_i=1.
\end{align}

\subsection{Task-Relevant Multi-view Teacher Learning via Information Bottleneck}
The teacher regulates the information flow from the augmented inputs
$\mathbf{X}_{\Phi}$ to the fused representation
$\mathbf{Z}_{\Phi}$.
Let $\mathbf{Y}$ denote the labels available in the source-domain
pre-training stage.
The teacher objective is
\begin{align}
\mathcal{L}_{\mathrm{IBT}}
=
-I(\mathbf{Z}_{\Phi};\mathbf{Y})
+
\beta_T I(\mathbf{Z}_{\Phi};\mathbf{X}_{\Phi}),
\label{eq:L_IBT}
\end{align}
where the first term retains task-predictive information and the
second compresses input-specific information.

The augmentation index $\Phi$ and the domain variable $D$ play
different roles.
Specifically, $\Phi$ identifies stochastic views generated within a
dataset, whereas $D$ identifies different source or target datasets.
Consequently, augmentation stability alone does not imply domain
invariance.
DIB-OD addresses heterogeneous transfer through the subsequent
core--residual decomposition and the protected adaptation mechanism,
rather than assuming that view invariance is sufficient for
cross-domain invariance.

\begin{lemma}[View-Conditioned Predictive Equivalence]
\label{lm:view_equivalence}
Let $\Phi$ denote the augmented-view index and let
$\widetilde{\mathbf{Z}}=\mathbf{Z}^{(\Phi)}$ be the representation
extracted from the sampled view.
If
\begin{align}
I(\mathbf{Y};\Phi)=0,
\qquad
I(\widetilde{\mathbf{Z}};\Phi\mid\mathbf{Y})=0,
\label{eq:view_conditions}
\end{align}
then
\begin{align}
I(\widetilde{\mathbf{Z}};\mathbf{Y})
=
I(\widetilde{\mathbf{Z}};\mathbf{Y}\mid\Phi).
\label{eq:view_equivalence}
\end{align}
\end{lemma}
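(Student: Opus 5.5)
We apply the chain rule of mutual information to the triple $(\widetilde{\mathbf{Z}},\mathbf{Y},\Phi)$ in two different orders and then use the two hypotheses in \eqref{eq:view_conditions}. Only standard identities are needed: the chain rule and the symmetry of (conditional) mutual information. No earlier result from the paper is required.

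\textbf{Step 1.} Expand $I(\widetilde{\mathbf{Z}};\mathbf{Y},\Phi)$ by first conditioning on $\Phi$ and then on $\mathbf{Y}$:
\begin{align}
I(\widetilde{\mathbf{Z}};\mathbf{Y},\Phi)
&= I(\widetilde{\mathbf{Z}};\Phi) + I(\widetilde{\mathbf{Z}};\mathbf{Y}\mid\Phi) \\
&= I(\widetilde{\mathbf{Z}};\mathbf{Y}) + I(\widetilde{\mathbf{Z}};\Phi\mid\mathbf{Y}).
\end{align}
By the second condition in \eqref{eq:view_conditions}, the last term is zero. Equating the two lines gives
\begin{align}
I(\widetilde{\mathbf{Z}};\mathbf{Y}\mid\Phi) = I(\widetilde{\mathbf{Z}};\mathbf{Y}) - I(\widetilde{\mathbf{Z}};\Phi).
\end{align}
It therefore suffices to show that $I(\widetilde{\mathbf{Z}};\Phi)=0$.

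\textbf{Step 2.} Expand $I(\Phi;\widetilde{\mathbf{Z}},\mathbf{Y})$ in the two orders:
\begin{align}
I(\Phi;\widetilde{\mathbf{Z}},\mathbf{Y})
&= I(\Phi;\mathbf{Y}) + I(\Phi;\widetilde{\mathbf{Z}}\mid\mathbf{Y}) \\
&= I(\Phi;\widetilde{\mathbf{Z}}) + I(\Phi;\mathbf{Y}\mid\widetilde{\mathbf{Z}}).
\end{align}
Both terms in the first line vanish by \eqref{eq:view_conditions}, so $I(\Phi;\widetilde{\mathbf{Z}},\mathbf{Y})=0$. In the second line both terms are nonnegative and sum to zero, so each is zero. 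In particular $I(\Phi;\widetilde{\mathbf{Z}})=0$. Substituting into Step 1 yields \eqref{eq:view_equivalence}.

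The argument is a formal manipulation, so the only obstacle is making sure the quantities are well defined. Implicitly, $\Phi$, $\mathbf{Y}$ and $\widetilde{\mathbf{Z}}=\mathbf{Z}^{(\Phi)}$ must be random variables on a common probability space. The relevant mutual informations must also be finite, so that the subtraction in Step 1 is legitimate. This holds, for instance, when $\Phi$ ranges over the finite set $\{1,\ldots,V\}$ and $\mathbf{Y}$ is discrete: then $I(\widetilde{\mathbf{Z}};\Phi)\le\log V$ and $I(\widetilde{\mathbf{Z}};\mathbf{Y})\le H(\mathbf{Y})<\infty$. I would state these finiteness assumptions explicitly at the start.
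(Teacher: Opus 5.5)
Your proof is correct and follows essentially the same route as the paper. Both arguments first establish $I(\widetilde{\mathbf{Z}};\Phi)=0$ and then apply the chain-rule identity $I(\widetilde{\mathbf{Z}};\mathbf{Y}\mid\Phi)-I(\widetilde{\mathbf{Z}};\mathbf{Y})=I(\widetilde{\mathbf{Z}};\Phi\mid\mathbf{Y})-I(\widetilde{\mathbf{Z}};\Phi)$; the only difference is that the paper gets marginal independence by factorizing $p(\widetilde{\mathbf{z}},\phi)$ directly, whereas you get it from the chain rule and nonnegativity. Incidentally, if you substitute both zeros into the two expansions of $I(\widetilde{\mathbf{Z}};\mathbf{Y},\Phi)$ before rearranging, no subtraction is needed, so your finiteness caveat becomes unnecessary.
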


\begin{proof}
The two conditions imply marginal independence between
$\widetilde{\mathbf{Z}}$ and $\Phi$. Indeed,
\begin{align}
p(\widetilde{\mathbf{z}},\phi)
&=
\sum_y
p(\widetilde{\mathbf{z}}\mid y,\phi)p(y,\phi)
\nonumber\\
&=
\sum_y
p(\widetilde{\mathbf{z}}\mid y)p(y)p(\phi)
=
p(\widetilde{\mathbf{z}})p(\phi),
\end{align}
and hence
$I(\widetilde{\mathbf{Z}};\Phi)=0$.
Using the mutual-information identity
\begin{align}
&I(\widetilde{\mathbf{Z}};\mathbf{Y}\mid\Phi)
-
I(\widetilde{\mathbf{Z}};\mathbf{Y})
\nonumber\\
&\qquad =
I(\widetilde{\mathbf{Z}};\Phi\mid\mathbf{Y})
-
I(\widetilde{\mathbf{Z}};\Phi),
\end{align}
both terms on the right-hand side are zero, which proves
Equation~\eqref{eq:view_equivalence}.
\end{proof}
Lemma~\ref{lm:view_equivalence} shows that, under label-preserving
augmentations and conditional view invariance, optimizing predictive
information can be equivalently expressed through a
view-conditioned objective.
\begin{lemma}[View-Wise Decomposition of Predictive Information]
\label{lm:view_decomposition}
Under the notation of Lemma~\ref{lm:view_equivalence}, the
view-conditioned predictive information decomposes as
\begin{align}
I(\widetilde{\mathbf{Z}};\mathbf{Y}\mid\Phi)
=
\sum_{i=1}^{V}
p_i
I\!\left(
\mathbf{Z}^{(i)};
\mathbf{Y}
\mid
\Phi=\phi_i
\right),
\label{eq:view_decomposition}
\end{align}
where $p_i=P(\Phi=\phi_i)$.
For uniformly sampled views, $p_i=1/V$.
\end{lemma}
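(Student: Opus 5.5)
The decomposition follows directly from the definition of conditional mutual information as an average over the conditioning variable. The plan is to write
\begin{align}
I(\widetilde{\mathbf{Z}};\mathbf{Y}\mid\Phi)
=
\sum_{\phi} P(\Phi=\phi)\,
I(\widetilde{\mathbf{Z}};\mathbf{Y}\mid\Phi=\phi),
\end{align}
where $I(\widetilde{\mathbf{Z}};\mathbf{Y}\mid\Phi=\phi)$ is the mutual information computed under the conditional joint law $p(\widetilde{\mathbf{z}},y\mid\phi)$. This is just the expectation form
$\mathbb{E}_{\Phi}\bigl[\mathbb{E}_{\widetilde{\mathbf{Z}},\mathbf{Y}\mid\Phi}\log\frac{p(\widetilde{\mathbf{z}},y\mid\phi)}{p(\widetilde{\mathbf{z}}\mid\phi)p(y\mid\phi)}\bigr]$, obtained by splitting the joint density as $p(\phi)\,p(\widetilde{\mathbf{z}},y\mid\phi)$ and summing over $\phi$ first.

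Next, since $\Phi$ ranges over the finite set $\{\phi_1,\ldots,\phi_V\}$, the sum has exactly $V$ terms with weights $p_i=P(\Phi=\phi_i)$. On the event $\Phi=\phi_i$, the sampled representation is by definition $\widetilde{\mathbf{Z}}=\mathbf{Z}^{(\Phi)}=\mathbf{Z}^{(i)}$. Hence the conditional joint law of $(\widetilde{\mathbf{Z}},\mathbf{Y})$ given $\Phi=\phi_i$ coincides with that of $(\mathbf{Z}^{(i)},\mathbf{Y})$ given $\Phi=\phi_i$, and each summand equals $I(\mathbf{Z}^{(i)};\mathbf{Y}\mid\Phi=\phi_i)$, which gives Equation~\eqref{eq:view_decomposition}. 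Uniform sampling then sets $p_i=1/V$. Note that the conditions of Lemma~\ref{lm:view_equivalence} are not needed for this identity. They only matter when combining it with Equation~\eqref{eq:view_equivalence} to express $I(\widetilde{\mathbf{Z}};\mathbf{Y})$ as the view average.

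The only delicate step is the substitution $\widetilde{\mathbf{Z}}=\mathbf{Z}^{(i)}$ inside the conditional mutual information. One must make sure we condition on the event rather than replace a random variable in the unconditional law. I would state explicitly that $\mathbf{Z}^{(i)}$ is defined on the same probability space and agrees with $\widetilde{\mathbf{Z}}$ almost surely on $\{\Phi=\phi_i\}$. I would also note that if $\Phi$ is sampled independently of the graph and labels, then $I(\mathbf{Z}^{(i)};\mathbf{Y}\mid\Phi=\phi_i)=I(\mathbf{Z}^{(i)};\mathbf{Y})$, though this simplification is optional. For continuous representations, the same argument applies with densities, assuming the conditional mutual informations are finite.
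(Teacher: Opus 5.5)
Your proposal is correct and matches the paper's proof: you write the conditional mutual information as the $p_i$-weighted average of the terms conditioned on each event $\{\Phi=\phi_i\}$, then substitute $\widetilde{\mathbf{Z}}=\mathbf{Z}^{(i)}$ on that event. Your two additional observations are accurate and only make the same argument more careful: the hypotheses of Lemma~\ref{lm:view_equivalence} are never used, and the substitution must be made inside the conditional law.
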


\begin{proof}
By the definition of conditional mutual information,
\begin{align}
I(\widetilde{\mathbf{Z}};\mathbf{Y}\mid\Phi)
&=
\mathbb{E}_{\Phi}
\left[
I(\widetilde{\mathbf{Z}};\mathbf{Y}\mid\Phi=\phi)
\right]
\nonumber\\
&=
\sum_{i=1}^{V}
p_i
I(\widetilde{\mathbf{Z}};\mathbf{Y}\mid\Phi=\phi_i).
\end{align}
Conditioned on $\Phi=\phi_i$,
$\widetilde{\mathbf{Z}}=\mathbf{Z}^{(i)}$, which yields
Equation~\eqref{eq:view_decomposition}.
\end{proof}
%


Lemma~\ref{lm:view_decomposition} characterizes the
predictive contribution of individual augmented views through the
sampling probabilities $p_i=P(\Phi=\phi_i)$.
These probabilities describe how the conditional objective is
averaged over stochastic views and are distinct from the learnable
fusion coefficients $\alpha_i$ in
Equation~\eqref{eq:fused_teacher}.
The latter are model parameters that allow the teacher to emphasize
views carrying stronger task-relevant information.

\begin{corollary}[Multi-view Predictive Objective]
\label{cor:multi_view_objective}
Under the conditions of Lemma~\ref{lm:view_equivalence},
\begin{align}
I(\widetilde{\mathbf{Z}};\mathbf{Y})
=
\sum_{i=1}^{V}
p_i
I\!\left(
\mathbf{Z}^{(i)};
\mathbf{Y}
\mid
\Phi=\phi_i
\right).
\label{eq:multi_view_objective}
\end{align}
\end{corollary}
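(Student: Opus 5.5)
The corollary follows by chaining the two preceding lemmas, so no new estimate is needed. I will take the hypotheses of Lemma~\ref{lm:view_equivalence}, namely $I(\mathbf{Y};\Phi)=0$ and $I(\widetilde{\mathbf{Z}};\Phi\mid\mathbf{Y})=0$ as in Equation~\eqref{eq:view_conditions}. That lemma then gives $I(\widetilde{\mathbf{Z}};\mathbf{Y})=I(\widetilde{\mathbf{Z}};\mathbf{Y}\mid\Phi)$, which converts the unconditional predictive information into the view-conditioned quantity.

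Next I will apply Lemma~\ref{lm:view_decomposition} to the right-hand side. It rests only on the definition of conditional mutual information as an expectation over $\Phi$, together with the identification $\widetilde{\mathbf{Z}}=\mathbf{Z}^{(i)}$ on the event $\Phi=\phi_i$. This expands $I(\widetilde{\mathbf{Z}};\mathbf{Y}\mid\Phi)$ as $\sum_{i=1}^{V}p_i\,I(\mathbf{Z}^{(i)};\mathbf{Y}\mid\Phi=\phi_i)$. Substituting this into the first equality yields Equation~\eqref{eq:multi_view_objective} directly.

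The only point that needs care is bookkeeping about the support of $\Phi$. I must check that $\Phi$ ranges over exactly the $V$ views $\phi_1,\ldots,\phi_V$ with $p_i=P(\Phi=\phi_i)$ summing to one, so that the expectation in Lemma~\ref{lm:view_decomposition} is the finite sum stated. If some $p_i=0$, the corresponding conditional term is undefined but carries zero weight, and I will adopt the convention that it is dropped.

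I do not expect a genuine obstacle, because both equalities hold exactly under the stated hypotheses. I would add a remark that for uniform view sampling the right-hand side reduces to the plain average $\frac{1}{V}\sum_{i}I(\mathbf{Z}^{(i)};\mathbf{Y}\mid\Phi=\phi_i)$. I would also stress that the weights are the sampling probabilities $p_i$, not the fusion coefficients $\alpha_i$ of Equation~\eqref{eq:fused_teacher}.
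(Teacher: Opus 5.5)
Your proposal is correct and matches the paper's proof. The paper likewise just chains Lemma~\ref{lm:view_equivalence}, which replaces $I(\widetilde{\mathbf{Z}};\mathbf{Y})$ by $I(\widetilde{\mathbf{Z}};\mathbf{Y}\mid\Phi)$, with Lemma~\ref{lm:view_decomposition}, which expands that quantity as the $p_i$-weighted sum; your remarks on the support of $\Phi$ and on $p_i$ versus $\alpha_i$ are harmless and agree with the paper's surrounding discussion.
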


\begin{proof}
The result follows directly by combining
Lemmas~\ref{lm:view_equivalence}
and~\ref{lm:view_decomposition}.
\end{proof}
Corollary~\ref{cor:multi_view_objective} provides an
objective-level justification for the fusion in
Equation~\eqref{eq:fused_teacher}.
The sampling probabilities $p_i$ determine the theoretical
view-wise expectation, whereas the learned coefficients $\alpha_i$
control representation aggregation; in general,
$\alpha_i\neq p_i$.



\subsection{Core--Residual Student Learning with Online Distillation}

The fused teacher representation is distilled into two student
branches.
We refer to $\mathbf{z}_{vs}$ as the \emph{invariant-core branch}
because it is optimized to retain label-predictive information, and
to $\mathbf{z}_{vr}$ as the \emph{residual branch} because its
label-predictive information is suppressed while complementary
instance-level teacher information is retained.
These names describe their optimization roles; they do not assume
that the two latent factors are causally identifiable.

The decoupled student objective is
\begin{align}
\mathcal{L}_{\mathrm{IBS}}
=
&-\kappa I(\mathbf{z}_{vs};\mathbf{Y})
+\beta_y I(\mathbf{z}_{vr};\mathbf{Y}) \nonumber\\
&+\beta_{vs}I(\mathbf{z}_{vs};\mathbf{z}_{vr})
-I(\mathbf{z}_{vr};\mathbf{Z}_{\Phi}).
\label{eq:L_IBS}
\end{align}
We use the contrastive losses
$\mathcal{L}_{\mathrm{CKD}}^{vs}$ and
$\mathcal{L}_{\mathrm{CKD}}^{vr}$ to align the student branches with
instance-level teacher representations; their detailed formulations
are provided in the supplementary material:
\begin{align}\label{eq:L_CKD}
\mathcal{L}_{\text{CKD}} = \mathcal{L}_{\text{CKD}}^{vs} + \mathcal{L}_{\text{CKD}}^{vr}\end{align}
Additionally, we enforce dependence regularization between $\mathbf{z}_{vs}$ and $\mathbf{z}_{vr}$ via the Hilbert-Schmidt Independence Criterion:
\begin{align}\label{eq:L_Orth}
 \mathcal{L}_{\text{Orth}} = \operatorname{HSIC}\left(\mathbf{Z}_{vs}^{S}, \mathbf{Z}_{vr}^{S}\right) = \frac{1}{(N-1)^{2}} \operatorname{tr}\left(\mathbf{K}_{vs}^{S}\mathbf{H}\mathbf{K}_{vr}^{S}\mathbf{H}\right)
\end{align}
where $\mathbf{K}_{vs}^{S}$ and $\mathbf{K}_{vr}^{S}$ are kernel matrices computed from the respective representations, and $\mathbf{H}$ is a centering matrix. 
The total distillation loss is $\mathcal{L}_{\mathrm{KD}} = \mathcal{L}_{\mathrm{CKD}} + \lambda_{\mathrm{Orth}} \mathcal{L}_{\mathrm{Orth}}$.

\begin{assumption}[Residual Sufficiency]
\label{as:dis1}
Let $D\in\{\mathcal{S},\mathcal{T}\}$ denote the domain variable.
For the theoretical analysis, we assume that after conditioning on
the residual representation $\mathbf{z}_{vr}$, the core
representation contains no additional domain information:
\begin{align}
I(\mathbf{z}_{vs};D\mid\mathbf{z}_{vr})=0.
\label{eq:I_D_S}
\end{align}
\end{assumption}

Assumption~\ref{as:dis1} is an idealized condition where domain-specific information is absorbed by the residual branch.
Our objectives encourage, but do not guarantee, this condition.

\begin{theorem}[Information-Decoupling Transfer Bound]
\label{th:gbd}
Let $\epsilon_{\mathcal{S}}(h)$ and
$\epsilon_{\mathcal{T}}(h)$ denote the source and target risks of
a hypothesis $h$ defined on $\mathbf{z}_{vs}$.
Under Assumption~\ref{as:dis1} and uniform domain priors,
\begin{align}
\epsilon_{\mathcal{T}}(h)
\leq
\epsilon_{\mathcal{S}}(h)
+
\sqrt{2I(\mathbf{z}_{vs};\mathbf{z}_{vr})}
+
\lambda,
\label{eq:mi_transfer_bound}
\end{align}
where $\lambda$ is the risk of the ideal joint hypothesis.
\end{theorem}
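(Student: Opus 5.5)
The plan is to combine the classical Ben-David style domain-adaptation bound with an information-theoretic control of the divergence term, applied in the representation space of $\mathbf{z}_{vs}$. The first step is the standard decomposition. For any $h$ acting on $\mathbf{z}_{vs}$, let $h^\ast$ be the ideal joint hypothesis with $\lambda=\epsilon_{\mathcal{S}}(h^\ast)+\epsilon_{\mathcal{T}}(h^\ast)$. The triangle inequality for the disagreement risk then gives
\begin{align*}
\epsilon_{\mathcal{T}}(h)\leq \epsilon_{\mathcal{S}}(h)+\lambda+\bigl|\epsilon_{\mathcal{T}}(h,h^\ast)-\epsilon_{\mathcal{S}}(h,h^\ast)\bigr|.
\end{align*}
For a $[0,1]$-valued loss, the last term is bounded by the total variation distance $\mathrm{TV}(P_{\mathcal{S}}(\mathbf{z}_{vs}),P_{\mathcal{T}}(\mathbf{z}_{vs}))$, up to the usual constant. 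I would fix the convention in which the disagreement difference is at most $\mathrm{TV}$, so that no factor of $2$ appears. The remaining task is therefore to show that this TV distance is at most $\sqrt{2I(\mathbf{z}_{vs};\mathbf{z}_{vr})}$.

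The second step converts the TV distance into mutual information with the domain variable. With uniform domain priors, the mixture $M=\tfrac12(P_{\mathcal{S}}+P_{\mathcal{T}})$ is the marginal of $\mathbf{z}_{vs}$, and
\begin{align*}
I(\mathbf{z}_{vs};D)=\tfrac12\mathrm{KL}(P_{\mathcal{S}}\|M)+\tfrac12\mathrm{KL}(P_{\mathcal{T}}\|M)=\mathrm{JS}(P_{\mathcal{S}},P_{\mathcal{T}}).
\end{align*}
Pinsker's inequality applied to each term gives $\mathrm{TV}(P_D,M)\leq\sqrt{\mathrm{KL}(P_D\|M)/2}$. Since $\mathrm{TV}(P_{\mathcal{S}},P_{\mathcal{T}})=2\,\mathrm{TV}(P_{\mathcal{S}},M)$, and by concavity of the square root, this yields
\begin{align*}
\mathrm{TV}(P_{\mathcal{S}},P_{\mathcal{T}})\leq\sqrt{2I(\mathbf{z}_{vs};D)}.
\end{align*}
This is exactly the constant in the statement.

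The third step uses Assumption~\ref{as:dis1} to replace $I(\mathbf{z}_{vs};D)$ by $I(\mathbf{z}_{vs};\mathbf{z}_{vr})$. The chain rule gives
\begin{align*}
I(\mathbf{z}_{vs};D,\mathbf{z}_{vr})=I(\mathbf{z}_{vs};\mathbf{z}_{vr})+I(\mathbf{z}_{vs};D\mid\mathbf{z}_{vr})=I(\mathbf{z}_{vs};\mathbf{z}_{vr}).
\end{align*}
The same quantity also equals $I(\mathbf{z}_{vs};D)+I(\mathbf{z}_{vs};\mathbf{z}_{vr}\mid D)\geq I(\mathbf{z}_{vs};D)$. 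Hence $I(\mathbf{z}_{vs};D)\leq I(\mathbf{z}_{vs};\mathbf{z}_{vr})$, and substituting this into the second step closes the bound \eqref{eq:mi_transfer_bound}.

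I expect the main obstacle to be bookkeeping rather than conceptual. Two points need care. First, the constant in the divergence term must come out as exactly $\sqrt{2I}$; this depends on the loss being bounded in $[0,1]$ and on using the TV (not $\mathcal{H}\Delta\mathcal{H}$) form of the disagreement bound. Second, the mutual informations must be taken under the uniform mixture over $D$, which is where the uniform-prior hypothesis is needed. If the first convention yields an extra factor, I would absorb it by defining risks via the $\tfrac12\ell_1$ distance, or else state the bound with the TV between the domain-conditional laws of $\mathbf{z}_{vs}$ directly.
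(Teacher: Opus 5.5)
Your proposal is correct and takes the same route as the paper's proof: the standard Ben-David bound, control of the divergence term by total variation, the bound $\mathrm{TV}\le\sqrt{2I(\mathbf{z}_{vs};D)}$ under the uniform domain prior (which you justify through the Jensen--Shannon identity and Pinsker's inequality), and the chain rule with Assumption~\ref{as:dis1} to obtain $I(\mathbf{z}_{vs};D)\le I(\mathbf{z}_{vs};\mathbf{z}_{vr})$. If you normalize $\mathrm{TV}$ as $\sup_A|P(A)-Q(A)|$ in both the disagreement bound and Pinsker's inequality, the constant comes out exactly as stated, so the fallback in your last paragraph is not needed.
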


\noindent\textit{Proof sketch.}
The standard domain-adaptation bound gives
$\epsilon_{\mathcal{T}}
\leq
\epsilon_{\mathcal{S}}
+
\frac{1}{2}d_{\mathcal{H}\Delta\mathcal{H}}
+\lambda$.
The divergence term is bounded by total variation, which under a
uniform domain prior is at most
$\sqrt{2I(\mathbf{z}_{vs};D)}$.
The mutual-information chain rule and
Assumption~\ref{as:dis1} yield
$I(\mathbf{z}_{vs};D)
\leq
I(\mathbf{z}_{vs};\mathbf{z}_{vr})$.
The complete derivation is provided in the supplementary material.
For the detailed proof, please refer to Appendix A.

The theorem motivates minimizing the branch-dependence term in
Equation~\eqref{eq:L_IBS}.
HSIC provides an additional tractable empirical dependence penalty,
but the theorem does not identify HSIC with mutual information.

\subsection{Self-adaptive Semantic Regularizer}
%
%
In DIB-OD, the key to preserving the pretrained invariant core lies in the adaptive formulation of the student Information Bottleneck objective.
The Self-adaptive Semantic Regularizer $\kappa$ (SSR for short defined in Equation~\eqref{eq:L_IBS}) , which dictates the importance of the invariant representation $\mathbf{z}_{vs}$, is dynamically determined. 
The intuition is that knowledge corresponding to semantic classes that the pre-trained model can predict with high stability and confidence is likely part of the generalizable, domain-invariant core. 
This is precisely the knowledge we must preserve. In contrast, knowledge for unstable or uncertain classes may be domain-specific or poorly learned, thus requiring more adaptation.

To quantify this invariant core, we devise a three-step process to compute a dynamic class-aware $\kappa$.
Let $\mathring{y}$ be the prediction of the pre-trained teacher model and $y^{\ast}$ be the ground-truth label.

1. \textbf{Quantifying Pre-trained Confidence}. We first establish a confidence baseline $t_b$ for each class $b$ in Equation~\eqref{eq:t_b}.
This threshold represents the teacher model's average predictive confidence over all target-training graphs predicted as class $b$, serving as a measure of its self-assessed certainty from pre-training.
\begin{align}\label{eq:t_b}
t_b = \frac{1}{ | \mathbf{X}_{ \mathring{y}=b} |} \sum_{\mathbf{x} \in \mathbf{X}_{ \mathring{y}=b}} \hat{p}(\mathring{y}=b; \mathbf{x})
\end{align}


Equation~\eqref{eq:t_b} is the empirical counterpart of the
following population-level class reliability.

\begin{theorem}[Reliability Interpretation of the Class Threshold]
\label{tm:class_threshold}
Let
\begin{align}
\hat{y}(\mathbf{x})
=
\arg\max_{r\in\{1,\ldots,m\}}
\hat{p}_{r}(\mathbf{x}).
\notag
\end{align}
Suppose that the frozen teacher is class-conditionally calibrated,
i.e.,
\begin{align}
\mathbb{E}
\left[
\mathbb{I}\{Y=c\}
\mid
\hat{p}_{c}(\mathbf{X}),
\hat{y}(\mathbf{X})=c
\right]
=
\hat{p}_{c}(\mathbf{X}).
\label{eq:class_calibration} 
\end{align}
Then the population class threshold
\begin{align}
t_c
=
\mathbb{E}
\left[
\hat{p}_{c}(\mathbf{X})
\mid
\hat{y}(\mathbf{X})=c
\right]
\notag
\end{align}
equals the precision of predictions assigned to class $c$:
\begin{align}
t_c
=
P
\left(
Y=c
\mid
\hat{y}(\mathbf{X})=c
\right).
\label{eq:threshold_precision}
\end{align}
\end{theorem}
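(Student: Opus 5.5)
The plan is to apply the tower property of conditional expectation to the calibration condition in Equation~\eqref{eq:class_calibration}, conditioning throughout on the event $A_c=\{\hat{y}(\mathbf{X})=c\}$, which we assume has positive probability so that every conditional quantity is well defined. Write $P_c=\hat{p}_c(\mathbf{X})$. The calibration hypothesis says that, on $A_c$, the conditional expectation of $\mathbb{I}\{Y=c\}$ given the $\sigma$-algebra generated by $P_c$ equals $P_c$ itself.

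\textbf{Key steps.}

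First, rewrite the precision as an expectation:
\[
P\left(Y=c\mid A_c\right)=\mathbb{E}\left[\mathbb{I}\{Y=c\}\mid A_c\right].
\]

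Second, apply the tower property with the inner conditioning on $(P_c, A_c)$:
\begin{align*}
\mathbb{E}\left[\mathbb{I}\{Y=c\}\mid A_c\right]
&=\mathbb{E}\left[\,\mathbb{E}\left[\mathbb{I}\{Y=c\}\mid P_c, A_c\right]\,\Big|\, A_c\right]\\
&=\mathbb{E}\left[P_c\mid A_c\right]=t_c .
\end{align*}
The middle equality is exactly the calibration assumption.

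If one prefers to avoid measure-theoretic language, the same computation can be done by integrating against the conditional law of $P_c$ given $A_c$:
\[
\int P(Y=c\mid P_c=s, A_c)\,dF_{P_c\mid A_c}(s)=\int s\,dF_{P_c\mid A_c}(s).
\]

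\textbf{Main obstacle.} The only delicate point is interpreting Equation~\eqref{eq:class_calibration} correctly. It is a conditional expectation given the joint information $(P_c,\mathbb{I}_{A_c})$, evaluated on the event $A_c$. It must therefore be read as holding almost surely on $A_c$, which is exactly what makes the tower step legitimate. I would state this reading explicitly and note two technical facts. The indicator $\mathbb{I}\{Y=c\}$ is bounded, so all expectations exist. Since $P_c\in[0,1]$, no integrability issue arises.

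Finally, I would remark that Equation~\eqref{eq:t_b} is the plug-in estimator of $t_c$. By the law of large numbers, it converges to the precision whenever the teacher is calibrated in this sense.
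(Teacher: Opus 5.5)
Your proof is correct and takes essentially the same route as the paper: a single application of the law of iterated expectation, with inner conditioning on $\hat{p}_c(\mathbf{X})$ within the event $\{\hat{y}(\mathbf{X})=c\}$, combined with the calibration identity. The paper substitutes calibration into $t_c$ and then collapses the tower, whereas you start from the precision and expand it, which is the same computation read in reverse; your explicit assumption $P(\hat{y}(\mathbf{X})=c)>0$ and your reading of the calibration condition as holding almost surely on that event are useful clarifications that the paper leaves implicit.
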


\begin{proof}
Let
\[
\hat{y}(\mathbf{X})
=
\arg\max_{r\in\{1,\ldots,m\}}
\hat{p}_{r}(\mathbf{X}).
\]
By definition, the population-level confidence threshold for class
$c$ is
\begin{align}
t_c
=
\mathbb{E}
\left[
\hat{p}_{c}(\mathbf{X})
\mid
\hat{y}(\mathbf{X})=c
\right].  
\label{eq:population_threshold}  
\end{align}

Under the class-conditional calibration assumption,
\begin{align} 
\mathbb{E}
\left[
\mathbb{I}\{Y=c\}
\mid
\hat{p}_{c}(\mathbf{X}),
\hat{y}(\mathbf{X})=c
\right]
=
\hat{p}_{c}(\mathbf{X}),
\label{eq:supp_class_calibration}
\end{align}
where $\mathbb{I}\{\cdot\}$ denotes the indicator function.

Substituting Equation~\eqref{eq:supp_class_calibration} into
Equation~\eqref{eq:population_threshold} gives
\begin{align}
t_c  
&=
\mathbb{E}
\Big[
\mathbb{E}
\big[
\mathbb{I}\{Y=c\}
\mid
\hat{p}_{c}(\mathbf{X}),
\hat{y}(\mathbf{X})=c
\big]
\mid
\hat{y}(\mathbf{X})=c
\Big].  \nonumber
\end{align}

By the law of iterated expectation,
\begin{align}
t_c
&=
\mathbb{E}
\left[
\mathbb{I}\{Y=c\}
\mid
\hat{y}(\mathbf{X})=c
\right]
\nonumber\\
&=
P
\left(
Y=c
\mid
\hat{y}(\mathbf{X})=c
\right).  \nonumber
\end{align}

Therefore, under class-conditional calibration, the mean confidence
of instances predicted as class $c$ equals the precision of the
teacher predictions assigned to class $c$.
\end{proof}

2. \textbf{Identifying Reliable Knowledge via Observation}.
Theorem~\ref{tm:class_threshold} gives $t_b$ a class-wise
reliability interpretation under calibration.
We therefore use $t_b$ as a class-specific baseline and construct
an observation matrix
$\mathbf{O}_{\mathring{y},y^{\ast}}$.
An entry $\mathbf{O}_{a,b}$ counts target-training instances that
are predicted as class $a$, have ground-truth class $b$, and whose
confidence in the true class exceeds the reliability baseline
$t_b$:
\begin{align}\label{eq:O_ab}
\mathbf{O}_{a,b}
=
\left|
\left\{
\mathbf{x}\in\mathbf{X}_{\mathring{y}=a}
\ \middle|\
y_{\mathbf{x}}^{\ast}=b,\,
\hat{p}_{b}(\mathbf{x})\geq t_b
\right\}
\right|.
\end{align}

3. \textbf{Deriving a Class Invariance Score}.
From this, we compute a normalized estimation matrix $\mathbf{E}_{\mathring{y},y^{\ast}}$ in Equation~\ref{eq:E_ab}.
Finally, we set $ \kappa_i = \mathbf{E}_{\mathring{y}=y_i,y^{\ast}=y_i } 
 $ with the diagonal element for each sample in the $\mathcal{L}_{\text{IBS}}$ loss.
\begin{align}\label{eq:E_ab}
\mathbf{E}_{a,b} = \frac{ \frac{ \mathbf{O}_{a,b}}{ \sum_{k=1}^{m} \mathbf{O}_{a,k}} \cdotp | \mathbf{X}_{\mathring{y}=a} |  }{  \sum_{j=1}^{m} \frac{ \mathbf{O}_{j,b}}{ \sum_{k=1}^{m} \mathbf{O}_{j,k}} \cdotp | \mathbf{X}_{\mathring{y}=j} | }
\end{align}

\subsection{Holistic Optimization Objective and Mutual Information Estimation}
The overall training objective combines task loss, IB losses, reconstruction loss, and distillation loss:
\begin{align}\label{eq:L_Total}
\mathcal{L}_{\mathrm{Total}} = \mathcal{L}_{\mathrm{Task}} + \lambda_{\mathrm{IB}}(\mathcal{L}_{\mathrm{IBT}} + \mathcal{L}_{\mathrm{IBS}}) + \lambda_{\mathrm{R}}\mathcal{L}_{\mathrm{R}} + \lambda_{\mathrm{KD}}\mathcal{L}_{\mathrm{KD}}
\end{align}
For mutual information terms, we use variational estimators: the Barber-Agakov lower bound for maximizing $I(\mathbf{Z}_{\phi}; \mathbf{Y})$~\cite{NIPS2003_a6ea8471} and $I(\mathbf{z}_{vs}; \mathbf{Y})$, and the CLUB upper bound~\cite{cheng2020club} for minimizing $I(\mathbf{z}_{vr}; \mathbf{Y})$, $I(\mathbf{z}_{vs}; \mathbf{z}_{vr})$, and $I(\mathbf{Z}_{\phi}; \Phi | \mathbf{Y})$.

\section{Experiments}

\subsection{Experimental Settings}
\textbf{Datasets.} 
 We evaluate DIB-OD on seven graph-classification benchmarks: NCI109, FRANKENSTEIN,
Mutagenicity, PROTEINS, D\&D, IMDB-BINARY, and
deezer\_ego\_nets. 
These datasets span molecular, bioinformatics, and social-network domains. Detailed statistics are provided in Appendix B. To reconcile cross-domain feature dimensions, we zero-pad target-domain node features to a common dimension before fine-tuning.

\textbf{Baseline Models.} 
We compare against GAT~\cite{velivckovic2017graph}, DANN~\cite{ganin2016domain}, DANE~\cite{zhang2019dane}, ASN~\cite{zhang2021adversarial}, UDAGCN~\cite{wu2020unsupervised}, GRADE~\cite{wu2023non}, JHGDA~\cite{shi2023improving}, and  covering graph representation learning and representative adversarial, structure-aware, and hierarchical graph-transfer approaches. 
For more baseline details, please refer Appendix~C.
%

%
%

\textbf{Default settings.}
Accuracy is the primary metric, and both pre-training and
target-domain fine-tuning use 10-fold cross-validation. 
The backbone comprises a three-layer GCN followed by an MLP
classifier. 
The teacher and student are jointly optimized during pre-training, whereas the teacher is frozen during fine-tuning. Both stages run for 100 epochs using Adam with a learning rate of $10^{-3}$. 
We set
$\beta_T=0.1$,
$\beta_y=\beta_{vs}=\lambda_{\mathrm{Orth}}=0.05$,
$\lambda_{\mathrm{IB}}=\lambda_{\mathrm{KD}}=0.01$, and
$\lambda_R=0.001$.

\subsection{Main Result}

\textbf{Analyzing Variation in Data Performance Among Comparable Collections.} We conduct experiments to test DIB-OD’s performance in single cross-domain learning, as shown in Table~\ref{table:DIB-OD_Intra_Type_Domain}.
To validate our model's capability for knowledge transfer, we design a series of cross-domain pre-training experiments within each category of comparable datasets. 
Specifically, we conduct experiments in the three distinct domains outlined previously.
For clarity, we denote the direction of cross-domain transfer using an arrow ($\rightarrow$) between the abbreviated names of the datasets. 
%
%
This notation is consistently applied throughout our experiments to indicate the source and target domains.


\textbf{Results Analysis.}
As shown in Table~\ref{table:DIB-OD_Intra_Type_Domain}, DIB-OD achieves the best mean accuracy in all eight within-type transfer settings. In particular, DIB-OD outperforms the strongest competing baselines, DANE and UDAGCN, by approximately $11.1$ and $13.1$ percentage points on M$\rightarrow$N and P$\rightarrow$D, respectively. These gains support the effectiveness of the proposed information-allocation and preservation mechanisms under transfers between related chemical and biological datasets.
 DIB-OD also achieves the best mean accuracy on both social-network transfers, I$\rightarrow$d and d$\rightarrow$I, despite the generally lower performance in this domain.
 %
 %
 By optimizing the aggregation of "decisive information" from diverse structural views, DIB-OD effectively constructs a more complete invariant core that captures fundamental protein motifs, which remain consistent despite variation in protein sizes.
(2) Stability in social networks: For social network transfers (I$\rightarrow$d and d$\rightarrow$I), while the overall performance of all models is relatively lower due to the high sparsity of ego-networks, DIB-OD still maintains a steady lead. 

\begin{table*}[t]
\caption{Intra-Domain Generalization Performance}
\tabcolsep 2 pt
\label{table:DIB-OD_Intra_Type_Domain}
\centering
\renewcommand{\arraystretch}{1.2}
\begin{tabular}{c| c c c c c c| c c | c c }

\toprule[1pt]
Datasets & $N \rightarrow F$ &  & $M \rightarrow  N$  &  $F \rightarrow  M$  &   $M \rightarrow  F$  & $ P \rightarrow D$ &   $ D \rightarrow P$    &   $ I \rightarrow d$   &   $ d  \rightarrow I$       \\ 
\midrule[0.5pt]

GAT & 44.67 ± 0.12 &    & 49.71 ± 0.57  &  55.46 ± 0.13 &  53.63± 0.55   &   50.73 ± 1.34    &  60.58 ± 0.71   &   43.39 ± 0.05   &   48.94 ± 1.16 \\\hline
DANN &  45.40 ± 1.36   &    & 49.45 ± 0.34 &  55.61 ± 0.46   & 48.73 ± 3.21   & 43.41 ± 4.74  &  50.37 ± 1.09  & 46.18 ± 1.50  &   50.92 ± 1.84  \\
UDAGCN & 56.57 ± 2.80 &    & 48.62 ± 6.98 &  50.61 ± 3.53  &  48.35 ± 5.45   &  60.68 ± 0.15  &  49.47 ± 1.14   & 49.47 ± 1.14   & 50.76 ± 0.11 \\
DANE & 51.24 ± 2.70  &     & 57.41 ± 0.99 & 52.71 ± 5.73   &     56.61 ± 2.69   &  48.72 ± 2.64 &    61.84 ± 2.10 &  48.72 ± 2.64 &  46.30 ± 1.29   \\
ASN & 53.22 ± 4.29  &    & 49.62 ± 0.06  & 44.64 ± 0.01 & 44.64 ± 0.00  &   56.64 ± 0.00  &  58.66 ± 0.00 & 43.19 ± 0.00 &   50.01 ± 0.02   \\
GRADE & 53.22 ± 4.29  &     &  50.80 ± 1.76  & 51.07 ± 5.25  &  46.80 ± 4.30     &  44.33 ± 3.11  &  42.21 ± 2.38 & 47.50 ± 0.63  & 50.86 ± 0.00 \\\hline
JHGDA  & 51.41 ± 4.98&    &  51.21 ± 0.91&  55.40 ± 0.09  &   48.56 ± 5.00  &  59.55 ± 0.45 &  58.44 ± 4.25 &  50.43 ± 4.50 &  51.36 ± 0.22   \\
DIB-OD & \textbf{57.30 ± 3.04}  &            &  \textbf{ 68.50 ± 2.67 }  &  \textbf{ 56.21 ± 2.68}  &  \textbf{57.09 ± 2.32}  &  \textbf{73.77 ± 2.90} &  \textbf{68.90 ± 5.72} & \textbf{54.04 ± 2.65}  &  \textbf{53.43 ± 2.02 }\\
\bottomrule[1pt]
\end{tabular}
\end{table*}

\subsection{Generalization Across Heterogeneous Domains
}

To rigorously evaluate the robustness and transferability of our pre-training framework, we further design a challenging set of experiments that involve knowledge transfer across heterogeneous domains with significant distribution shifts. 
Specifically, we investigate the scenario where a model is pre-trained on a dataset from one broad category  and subsequently fine-tuned and evaluated on datasets from a structurally and semantically different category.

Table~\ref{table:DIB-OD_Extra_Type_Domain} presents the results for inter-type transfers. The performance gap between DIB-OD and existing methods is even more pronounced here than in intra-domain tasks.
(1) Overcoming the "Generalization Wall": Baselines like UDAGCN and ASN, which perform reasonably well in intra-domain tasks, suffer from significant performance degradation in cross-type scenarios (e.g., 
N $\rightarrow$ P
 or M $\rightarrow$ D). 
In contrast, DIB-OD maintains high accuracy, achieving a 70.17\% accuracy in N $\rightarrow$ P, outperforming GAT by over 10\%. This proves that our framework successfully disentangles universal structural laws from category-specific styles (e.g., chemical valency vs. biological folding patterns).
(2) Effectiveness of Decoupled Representations: The most striking improvement is observed in M $\rightarrow$ D
 and I $\rightarrow$ P  transfers, where DIB-OD exceeds all baselines by more than 10-15 percentage points. 
 This suggests that by explicitly separating redundant information via HSIC and the Decoupled IB, DIB-OD prevents ``negative transfer", where source-specific noise (e.g., IMDB's specific social connectivity) would otherwise contaminate the target domain's learning process.
(3) Consistency Across Directions: Whether transferring N $\rightarrow$ P or P $\rightarrow$ N, DIB-OD provides symmetrically robust results (70\% and 69\% respectively). 
This symmetry indicates that the identified invariant core is truly representative of a universal graph language that transcends specific domain boundaries. 
\begin{table*}[!t]
\setlength{\tabcolsep}{1mm}
\caption{\textbf{Cross-Type Data Performance Comparison}}
\label{table:DIB-OD_Extra_Type_Domain}
\centering
\renewcommand{\arraystretch}{1.2}
\begin{tabular}{c| c c c c c c   c  c c }

\toprule[1pt]
Datasets & $N \rightarrow P$ & $ P  \rightarrow N$ & $M \rightarrow P $ &  $P \rightarrow  M $  &   $M \rightarrow  D$  & $ D \rightarrow M $    &       $ I  \rightarrow P$   &  $D \rightarrow  N$\\ 

GAT & 59.57 ± 0.01& 50.38 ± 0.01 & 52.56 ± 1.18 & 55.35 ± 0.23 & 49.19 ± 1.20 & 44.78 ± 0.18  & 40.42 ± 0.03  & 49.38 ± 0.38  \\\hline

DANN &  55.58 ± 8.72 & 50.38 ± 0.00 & 61.40 ± 2.23 & 44.84 ± 0.41 & 44.64 ± 0.00 & 44.60 ± 0.07   & 40.38 ± 0.07   &  50.87 ± 0.90  \\
UDAGCN & 44.87 ± 0.25 & 45.56 ± 4.59  &  54.16 ± 0.83 &   \textbf{73.29  ± 0.00} &  51.15 ± 3.11 & 52.20 ± 2.88  &  49.98 ± 0.23 & 43.30 ± 0.95\\

DANE & 34.91 ± 3.44 &  41.89 ± 2.29 & 52.59 ± 2.08 & 59.75 ± 1.13 &  47.26 ± 7.32 &   46.11 ± 5.16  & 60.62 ± 3.88  &    47.63 ± 3.45\\
ASN &  40.43 ± 0.00 &  49.62 ± 0.01 & 59.57 ± 0.00 &  55.34 ± 0.02  & 58.66 ± 0.00   & 55.36 ± 0.00   &  59.75 ± 0.01    &  49.62 ± 0.03\\

GRADE & 59.48 ± 0.08   & 50.51 ± 0.69  & 49.31 ± 8.60  & 48.94 ± 5.27   &  47.79 ± 8.91  &  58.53 ± 7.28  & 40.49 ± 0.49   & 50.08 ± 0.37 \\\hline

JHGDA  & 57.99 ± 4.16 & 53.11 ± 0.54 & 59.59 ± 0.04&  58.11 ± 0.66  & 59.13 ± 1.06 &  54.05 ± 3.89 & 50.68 ± 7.19     &   53.67 ± 0.93\\
DIB-OD & \textbf{70.17 ± 2.41}  & \textbf{69.28 ± 2.71} & \textbf{69.81 ± 3.28}   &  70.28 ± 3.21  & \textbf{74.53 ± 3.97} & \textbf{69.63 ± 3.52}   &    \textbf{70.80 ± 3.33}       & \textbf{67.82 ± 2.63} \\
\bottomrule[1pt]
\end{tabular}
\end{table*}

\subsection{ABLATION STUDIES}
\subsubsection{synthesizes and analyzes.}
%
We select four intra-type and four inter-type transfer pairs to cover different source-target directions and domain categories.
The results are summarized in Table~\ref{table:ablation_hemogenous_hetergenous}.

\begin{table*}[!ht]
\caption{Ablation Results (\%) on Within-type and Cross-type  Graph Transfer}
\tabcolsep 2 pt
\label{table:ablation_hemogenous_hetergenous}
\centering
\renewcommand{\arraystretch}{1.2}
\begin{tabular}{c| c c c c | c c   c  c c }
\toprule[1pt]
Datasets & $N \rightarrow F$ & $ F  \rightarrow N$ & $M \rightarrow N $ &  $P \rightarrow  D $ & $N \rightarrow P$ & $ P  \rightarrow N$ &  $M \rightarrow  D $  &  $ I  \rightarrow P$    \\ 
\midrule[0.5pt]
w/o  IB  & 56.75 ± 1.70 & 69.64 ± 2.56 & 51.22 ± 2.31  &  70.37 ± 8.19 &  59.57 ± 0.01& 68.04 ± 2.66 & 73.59 ± 5.20 & 69.73 ± 5.36      \\   
w/o  HSIC &  56.56 ± 2.72 &  69.30 ± 3.03  &  68.21 ± 2.64  & 72.82 ± 6.16 & 67.30 ± 5.31 & 67.70 ± 2.98 & 73.69 ± 5.38 & 67.66 ± 3.43   \\
w/o SSR  & 56.79 ± 2.60 & 52.10 ± 3.70  &  51.44 ± 5.72  &  73.51 ± 4.84 & 53.42 ± 2.25 & 68.06 ± 2.52  &  72.58 ± 4.53 &   68.46 ± 5.87  \\
Full Fine-tuning & 56.42 ± 3.37 &  52.56 ± 2.61 & 52.29 ± 3.64  &   73.34 ± 4.05  & 69.63 ± 3.65 &  67.75 ± 3.12 & 67.84 ± 2.26 & 67.94 ± 3.16  \\\hline
Ours &  \textbf{57.30 ± 3.04} &  \textbf{69.83 ± 2.91 }  & \textbf{68.50 ± 2.67} &  \textbf{73.77 ± 2.90} &  \textbf{70.17 ± 2.41} & \textbf{69.28 ± 2.71} &  \textbf{74.53 ± 3.97 }
  & \textbf{ 70.80 ± 3.33 }\\
\bottomrule[1pt]
\end{tabular}
\end{table*}

\begin{figure*}[!t]
    \centering

    \begin{minipage}[t]{0.31\textwidth}
        \centering
        \includegraphics[width=\linewidth]
        {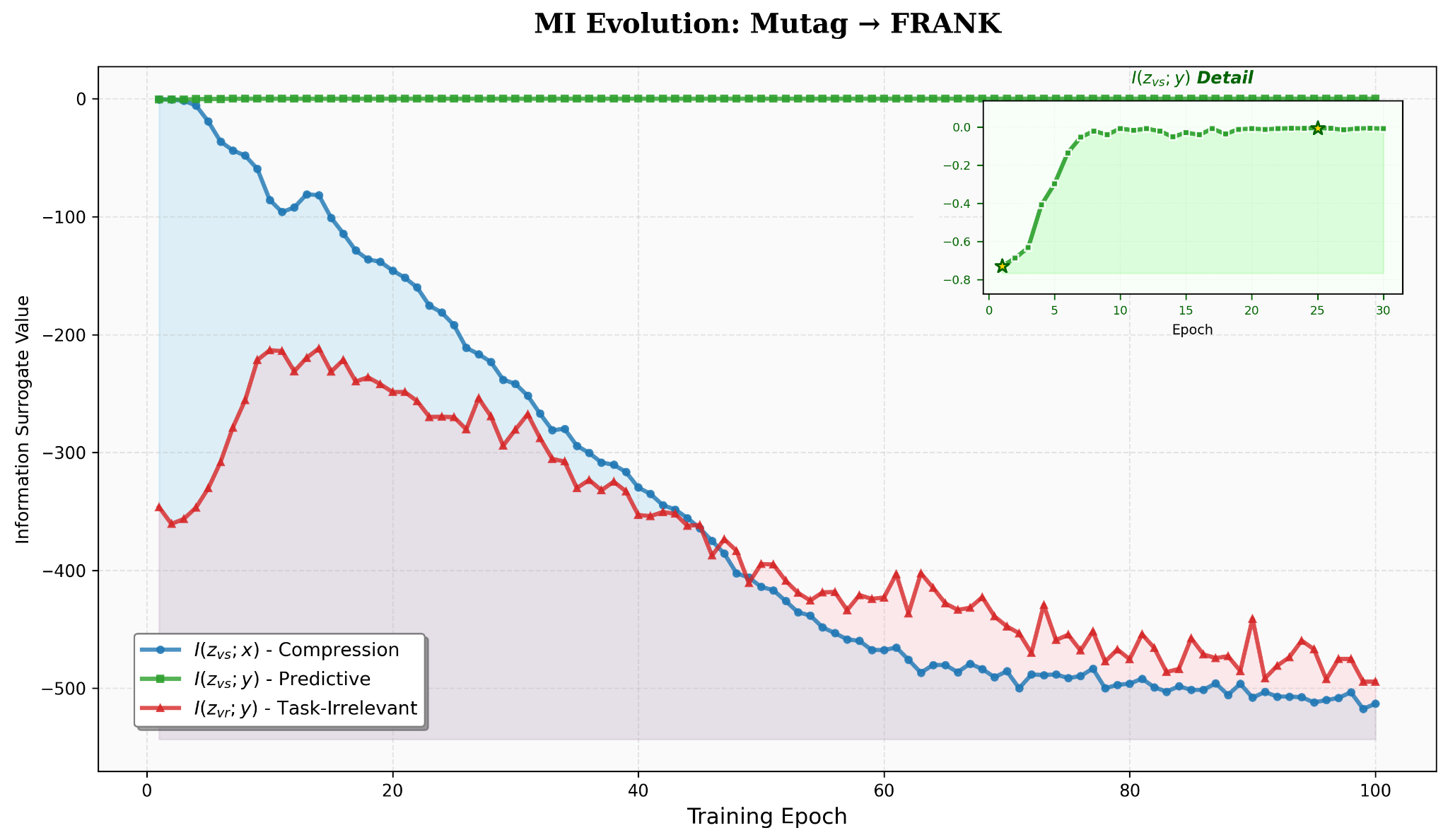}

        \small (a) Mutagenicity $\rightarrow$ FRANKENSTEIN
    \end{minipage}
    \hfill
    \begin{minipage}[t]{0.31\textwidth}
        \centering
        \includegraphics[width=\linewidth]
        {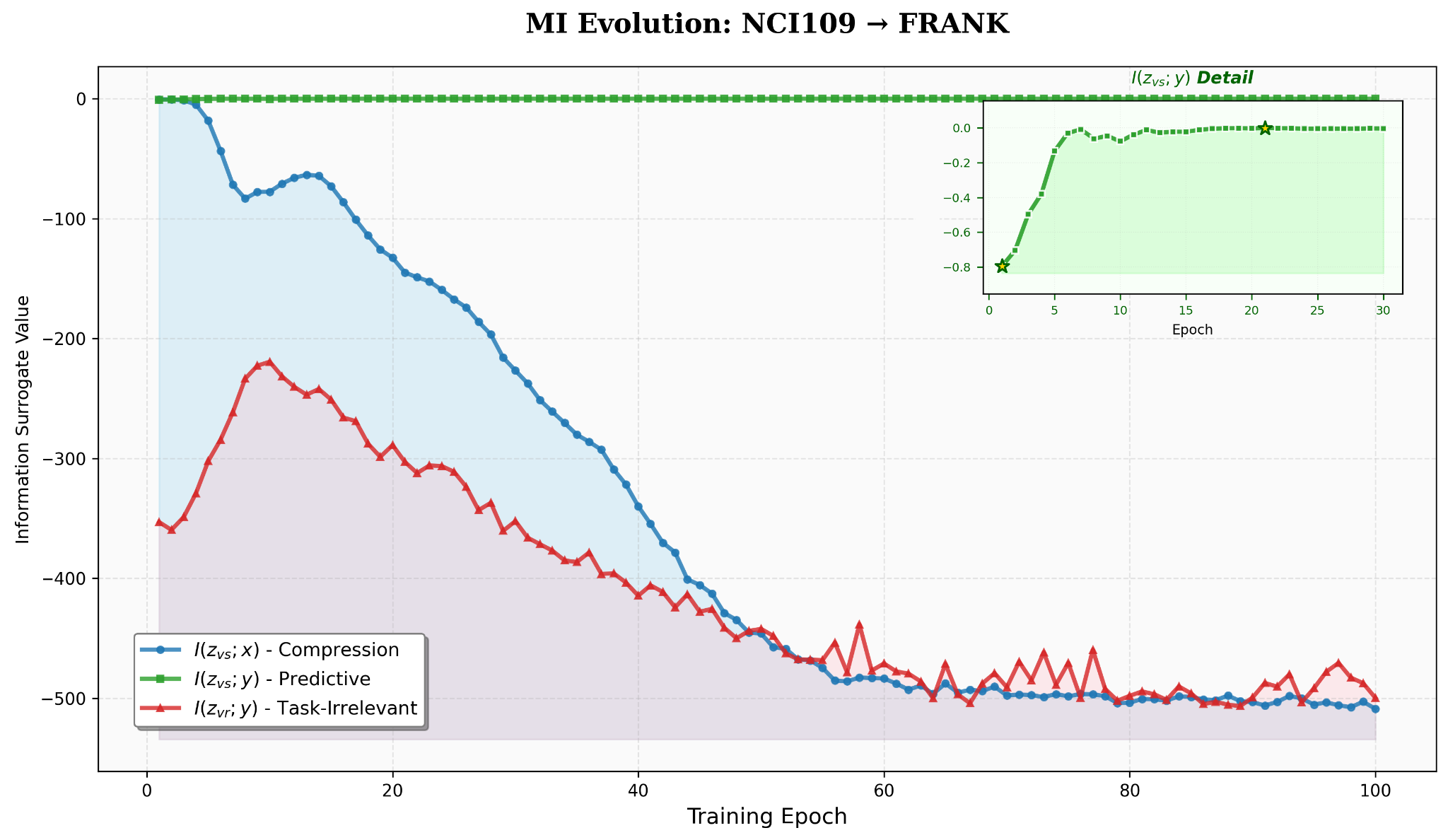}

        \small (b) NCI109 $\rightarrow$ FRANKENSTEIN
    \end{minipage}
    \hfill
    \begin{minipage}[t]{0.31\textwidth}
        \centering
        \includegraphics[width=\linewidth]
        {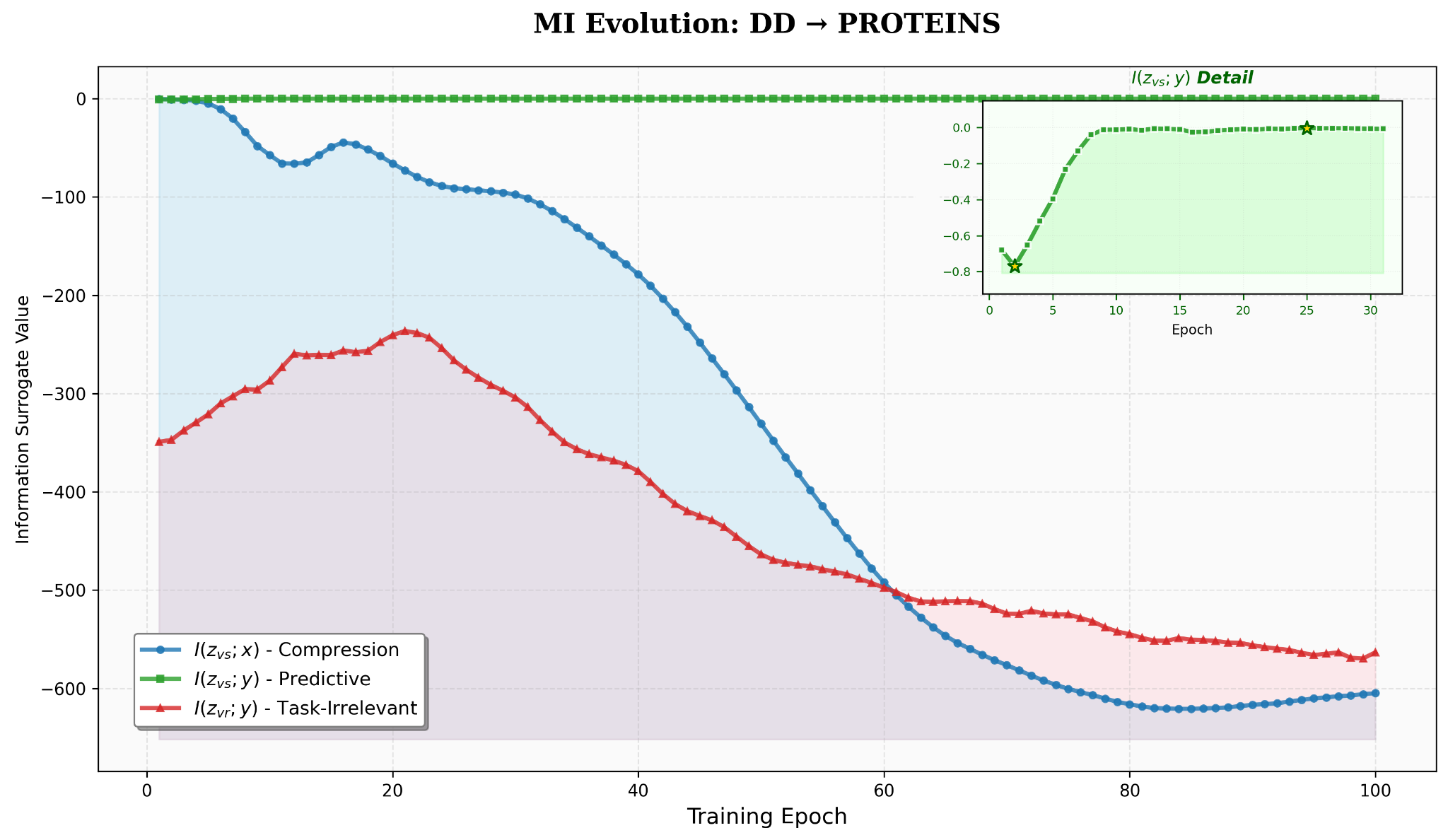}

        \small (c) DD $\rightarrow$ PROTEINS
    \end{minipage}

    \par\medskip

    \begin{minipage}[t]{0.31\textwidth}
        \centering
        \includegraphics[width=\linewidth]
        {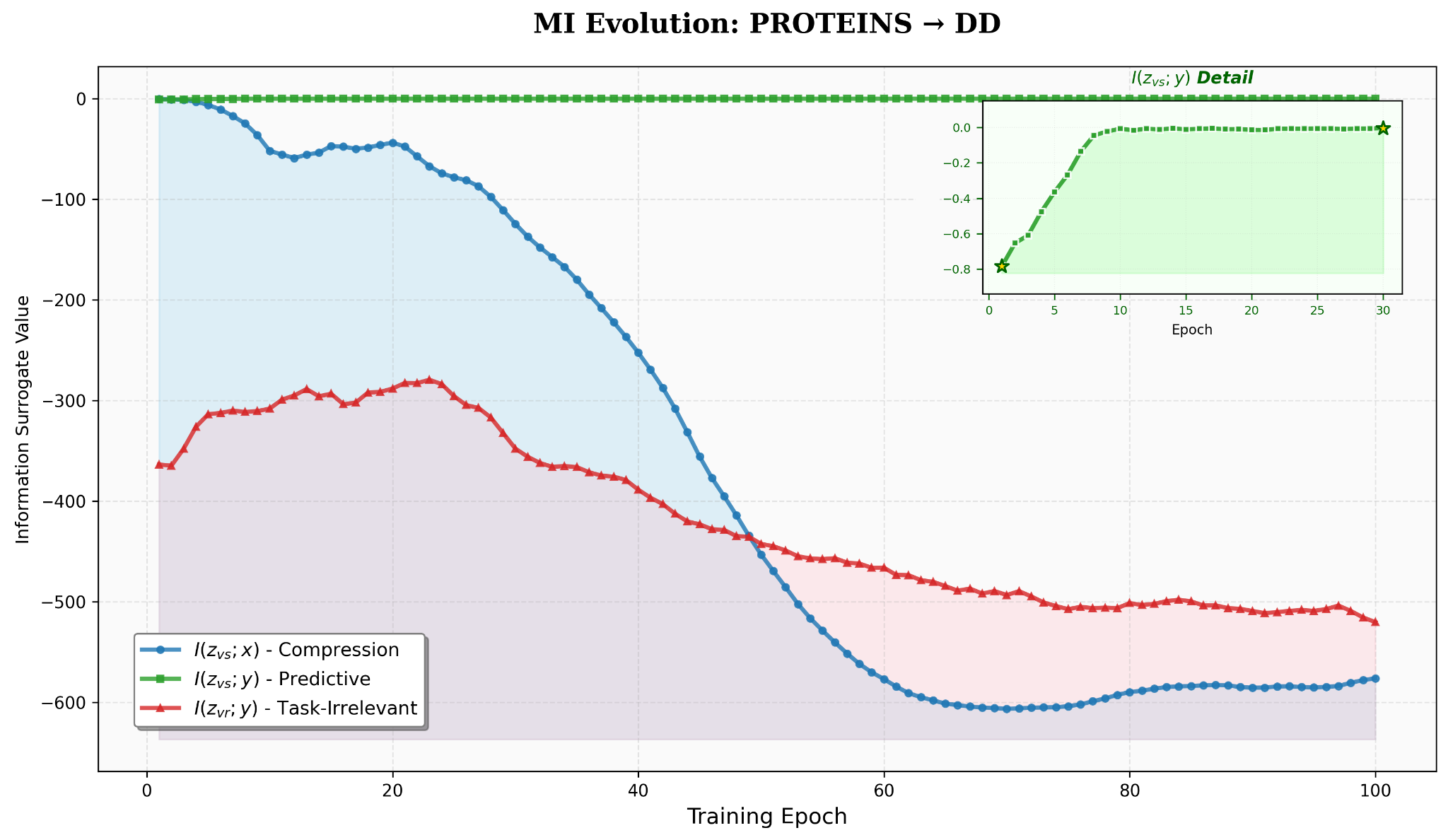}

        \small (d) PROTEINS $\rightarrow$ DD
    \end{minipage}
    \hfill
    \begin{minipage}[t]{0.31\textwidth}
        \centering
        \includegraphics[width=\linewidth]
        {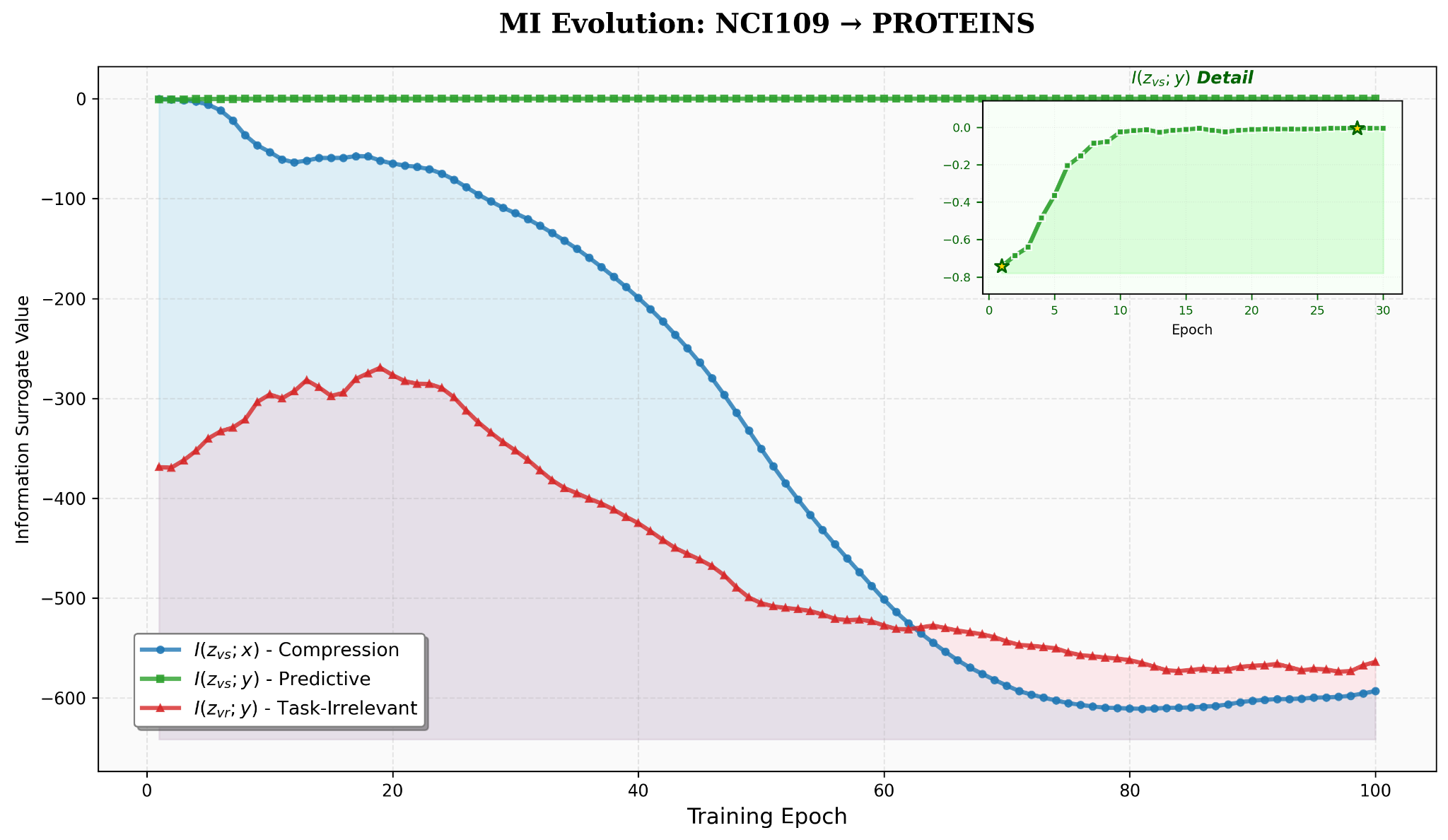}

        \small (e) NCI109 $\rightarrow$ PROTEINS
    \end{minipage}
    \hfill
    \begin{minipage}[t]{0.31\textwidth}
        \centering
        \includegraphics[width=\linewidth]
        {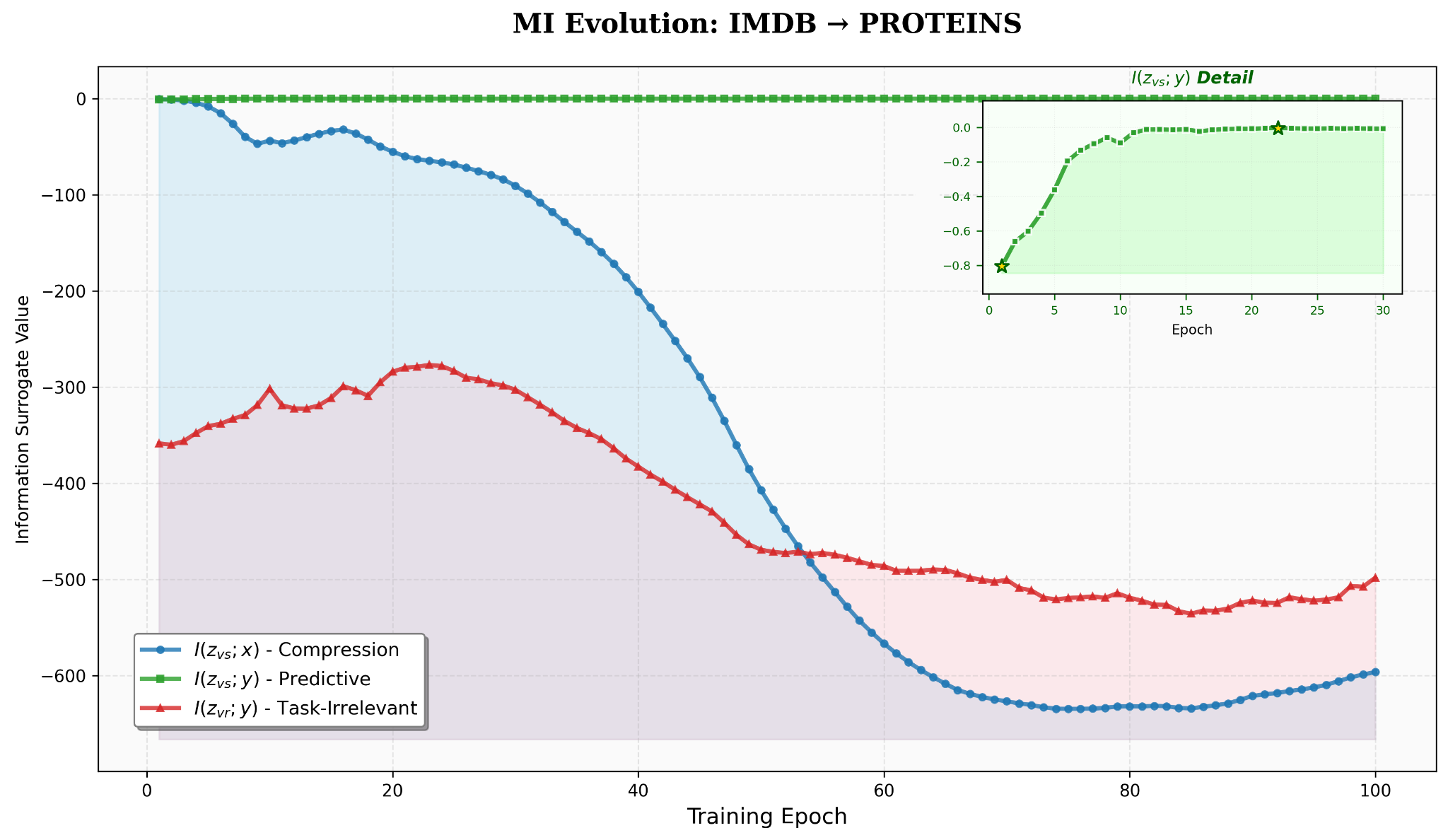}

        \small (f) IMDB-BINARY $\rightarrow$ PROTEINS
    \end{minipage}

    \caption{Changes in mutual information during cross-domain adaptation.}
    \label{fig:totalMIstate}
\end{figure*}

\textbf{Efficacy of SSR:}~The removal of the SSR (setting $\kappa = 1$) leads to a significant performance decline, particularly in challenging transfers like F $\rightarrow$ N (from 69.83\% to 52.10\%) and M $\rightarrow$ N (from 68.50\% to 51.44\%). 
This drop suggests that without class-aware adaptive weighting, the model treats all structural knowledge equally, allowing noisy target-specific signals to overwrite the hard-won "invariant core."
The SSR's ability to protect high-confidence semantic knowledge is crucial for maintaining the stability of representations during adaptation.

\textbf{Invariant-Core Preservation under Target-Domain Adaptation:}~The comparison with Full Fine-tuning where the teacher model is not frozen highlights the risk of knowledge corruption. 
In scenarios such as N $\rightarrow$ F
, full fine-tuning results in lower accuracy (56.42\%) compared to DIB-OD (57.30\%). 
This confirms that keeping a stable "frozen teacher" as a knowledge anchor prevents the model from over-adjusting to the target domain's distribution, thereby preserving the generalizable graph principles captured during pre-training.

\textbf{The Necessity of Information Compression:}~
%
Removing the IB terms causes a substantial degradation on several tasks. The drop is particularly pronounced on the intra-type M~$\rightarrow$~N transfer, from 68.50~\% to 51.22~\%, and on the inter-type N~$\rightarrow$~P transfer, from 70.17~\% to 59.57~\%.
The compression terms $I(\mathbf{Z}_{\phi}; \mathbf{X}_{\Phi})$ and $ I(\mathbf{z}_{vs}; \mathbf{z}_{vr})$ serve as filters that discard domain-specific "redundant styles. 
Without this bottleneck, the student model's $\mathbf{z}_{vs}$ becomes contaminated with spurious correlations from the source domain, leading to severe negative transfer when the model encounters the different connectivity logic of the target domain.

\textbf{The Role of Orthogonal Decoupling:} The results for removing HSIC further validate our decoupling strategy. 
%
%
Removing HSIC decreases accuracy by 2.87~\% and 1.58~\% on N$\rightarrow$P and P$\rightarrow$N, respectively.
This empirical evidence supports our theoretical claim: while IB guides the information flow, the HSIC-based regularizer ensures that the invariant and redundant subspaces are statistically independent.
Without this forced separation, the invariant core fails to isolate itself from category-dependent noise, which limits the model’s ability to bridge fundamentally different graph categories.
\subsubsection{Information Bottleneck Dynamics}
%





\subsection{Ablation Studies}

Table~\ref{table:ablation_hemogenous_hetergenous} evaluates the contribution of the IB,
HSIC, SSR, and frozen-teacher components on four within-type and
four cross-type transfer directions.
Removing SSR causes the largest drops on
F$\rightarrow$N, M$\rightarrow$N, and N$\rightarrow$P,
showing the importance of class-aware preservation during target
adaptation.
Removing the IB terms also substantially degrades
M$\rightarrow$N and N$\rightarrow$P, supporting the role of
information compression and branch allocation.
Removing HSIC produces smaller but consistent reductions across all
eight directions, with the largest drops on
N$\rightarrow$P and I$\rightarrow$P.
Finally, unfreezing the teacher is particularly harmful on
F$\rightarrow$N, M$\rightarrow$N, and M$\rightarrow$D,
which is consistent with the frozen teacher serving as a stable
adaptation anchor.

\section{Conclusion}

We propose DIB-OD for robust heterogeneous graph adaptation, disentangling a task-relevant invariant core from domain-specific redundancy via a decoupled information bottleneck and online distillation—where the core preserves predictive information and the residual captures instance-level details.
 A confidence-aware semantic regularizer, dynamically tuned by class-wise reliability, further protects transferable knowledge during target adaptation. Extensive experiments on chemical, biological, and social graphs demonstrate consistent improvements, with substantial gains under challenging cross-type shifts, confirming the efficacy of our decomposition strategy.


\appendix


\bibliography{aaai2027}
\clearpage

\bigskip




\end{document}